\documentclass[11pt]{article}
\usepackage[margin=1in]{geometry}
\usepackage{amsmath,amssymb,amsthm,mathtools}
\usepackage{booktabs}
\usepackage{multirow}
\usepackage{xcolor}
\usepackage{algorithm}
\usepackage{algpseudocode}
\usepackage[authoryear]{natbib}
\setcitestyle{authoryear,open={ ( },close={ ) },aysep={,}}
\usepackage{hyperref}

\hypersetup{colorlinks=true,linkcolor=blue,citecolor=blue,urlcolor=blue}

\newtheorem{proposition}{Proposition}

\title{Technical Report: Activation Residual Hessian Quantization (ARHQ) for Low-Bit LLM Quantization}
\author{YiFeng Wang\thanks{wang.yifeng.q6@dc.tohoku.ac.jp}, Zhun Sun\thanks{sun.zhun.a5@tohoku.ac.jp}, Keisuke Sakaguchi \vspace{5pt} \\  \small{Graduate School of Information Sciences} \\ \small{Tohoku University}}
\date{}

\begin{document}
\maketitle

\vspace{-20pt}
\begin{abstract}
We present \emph{Activation Residual Hessian Quantization} (ARHQ), a post-training weight splitting method designed to mitigate error propagation in low-bit activation-weight quantization. By constructing an input-side residual Hessian from activation quantization residuals ($G_x$), ARHQ analytically identifies and isolates error-sensitive weight directions into a high-precision low-rank branch. This is achieved via a closed-form truncated SVD on the scaled weight matrix $WG_x^{1/2}$. Experimental results on Qwen3-4B-Thinking-2507 demonstrate that ARHQ significantly improves layer-wise SNR and preserves downstream reasoning performance on ZebraLogic even under aggressive quantization. The code is available at \url{https://github.com/BeautMoonQ/ARHQ}.
\end{abstract}

\section{Introduction}
Low-bit quantization of Large Language Models (LLMs) frequently encounters a fundamental mismatch between compression efficiency and inference fidelity. 
Conventional low-rank splitting methods typically approach this by addressing different sources of information loss: (1) minimizing the Frobenius energy to reconstruct the weight matrix $W$~\citep{wang2024svd}; (2) preserving heavy-hitter features by weighting the objective with activation energy $H_x = \frac{1}{N}X^\top X$~\citep{yuan2023asvd}; or (3) absorbing magnitude outliers into a high-precision branch to reduce the dynamic range and clipping errors of the quantized main branch~\citep{li2024svdquant}. However, we observe that under aggressive activation quantization, the dominant source of output error is often not the loss of original signal energy nor the presence of magnitude outliers alone, but rather how the linear map amplifies quantization residuals originating from the input activations. 

This observation motivates Activation Residual Hessian Quantization (ARHQ). Unlike previous works that aim to \emph{protect the signal} or \textit{absorb the outliers}, ARHQ is designed specifically to \textit{suppress the noise amplification}. 
By decomposing the output error of a quantized linear layer, we isolate the specific term $E_x W^\top$ responsible for propagating activation residuals $E_x$. ARHQ optimizes a low-rank split for this term, re-casting the task as a weighted low-rank approximation problem under a metric induced by the residual covariance $G_x = \frac{1}{N}E_x^\top E_x$. Intuitively, ARHQ identifies weight directions that most strongly amplify quantization noise and assigns them to a high-precision low-rank branch, while the remaining "residual-insensitive" components follow the standard quantized path. The optimization objective of ARHQ represents a significant departure from existing paradigms:  

\paragraph{Weight-only Reconstruction}~\citep{wang2024svd}: Minimizes pure weight approximation error by \begin{equation}
    \min_{L} \lVert W - L \rVert_F^2;
\end{equation}

\paragraph{Outlier Absorption}~\citep{li2024svdquant}: Extracts magnitude outliers into $L$ to minimize the quantization clipping error of the residual main branch by 
\begin{equation}
    \min_{L} \lVert (W - L) - Q_w(W - L) \rVert_F^2;
\end{equation}

\paragraph{Activation-aware Reconstruction}~\citep{yuan2023asvd}: Weights the reconstruction by the input activation covariance $H_x$ by
\begin{equation}\min_{L} \lVert X(W - L)^\top \rVert_F^2 \approx \min_{L} \lVert (W - L)H_x^{1/2} \rVert_F^2;
\end{equation}

\paragraph{Activation-Residual-aware Reconstruction}: Weights the reconstruction by the quantization residual covariance $G_x$ by 
\begin{equation}
\min_{L} \lVert E_x(W - L)^\top \rVert_F^2 = \min_{L} \lVert (W - L)G_x^{1/2} \rVert_F^2.
\end{equation}

We justify the term Residual Hessian by demonstrating that $G_x$ is precisely the shared input-side Hessian block of the residual propagation loss. This distinction is critical: since $G_x$ is derived directly from the observed quantization error $E_x$, ARHQ is inherently quantizer-adaptive. It adapts to the specificities of the deployed hardware codebooks, scaling rules, and calibration distributions, rather than assuming isotropic or quantizer-agnostic noise. 

\section{Activation Residual Hessian Quantization}
\subsection{Quantization Error Decomposition}

Consider a linear layer defined by
\begin{equation}
Y = X W^\top,
\end{equation}

where $X \in \mathbb{R}^{N \times D_{\mathrm{in}}}$ is the input activation matrix and $W \in \mathbb{R}^{D_{\mathrm{out}} \times D_{\mathrm{in}}}$ is the weight matrix.

Let $Q_x(\cdot)$ and $Q_w(\cdot)$ denote the activation and weight quantizers, respectively. We maintain a consistent notation for quantization residuals throughout this report:
\begin{equation}
    E_x = Q_x(X) - X, \qquad E_w = Q_w(W) - W
\end{equation}

The actual forward pass of the quantized layer is given by
\begin{equation}
    \hat{Y} = Q_x(X) Q_w(W)^\top = (X + E_x)(W + E_w)^\top
\end{equation}

Expanding this expression reveals the full output error
\begin{equation}
\hat{Y} - Y = E_x W^\top + X E_w^\top + E_x E_w^\top.
\label{eq:full_error}
\end{equation}

Equation~\eqref{eq:full_error} separates the total error into three distinct components: activation-error propagation ($E_x W^\top$), weight-error propagation ($X E_w^\top$), and their second-order interaction. ARHQ is specifically designed to tackle the first term, $E_x W^\top$, as it directly captures how the layer amplifies activation quantization noise, which is considered a dominant source of degradation in low-bit regimes, and conveniently admits a tractable closed-form solution.

\subsection{Isolating the Residual Propagation via a Low-Rank Branch}
To mitigate this error amplification, ARHQ splits the weight matrix into two fundamental components:
\begin{equation}
    W = W_{\mathrm{res}} + L, \qquad \mathrm{rank}(L) \le r
\end{equation}

Here, $W_{\mathrm{res}}$ remains on the primary quantized branch, while $L$ acts as a high-precision, low-rank side branch. In practical deployments, to satisfy the rank constraint and achieve actual computational and memory savings, this low-rank matrix is explicitly parameterized as the product of two factor matrices as $L = BA^\top$, where $A \in \mathbb{R}^{D_{\mathrm{in}} \times r}$ acts as a down-projection and $B \in \mathbb{R}^{D_{\mathrm{out}} \times r}$ acts as an up-projection (the exact analytical forms of $B$ and $A$ will be derived in Section 2.5).

During actual inference, the forward pass of the layer is structurally approximated by these two parallel branches as 
\begin{equation}
    \hat{Y} = Q_x(X) Q_w(W_{\mathrm{res}})^\top + X L^\top
\end{equation}

To derive a closed-form split for $L$, we must isolate the activation-residual propagation. We do this by temporarily assuming the main branch $W_{\mathrm{res}}$ is kept in full precision, yielding a surrogate inference model denoted as

\begin{equation}
    \hat{Y}_{\mathrm{approx}} = Q_x(X) W_{\mathrm{res}}^\top + X L^\top
\end{equation}

Substituting $W_{\mathrm{res}} = W - L$ into this surrogate model, we evaluate the gap between the exact floating-point output and the surrogate approximation

\begin{equation}
    Y - \hat{Y}_{\mathrm{approx}} = XW^\top - Q_x(X)(W-L)^\top - XL^\top = (X - Q_x(X))(W-L)^\top.
\end{equation}

Since the activation residual is defined as $E_x = Q_x(X) - X$, the error magnitude becomes exactly $E_x(W-L)^\top$. Minimizing the Frobenius energy of this isolated error leads directly to the ARHQ primal objective as
\begin{equation}
    \min_{\mathrm{rank}(L) \le r} \lVert E_x (W-L)^\top \rVert_F^2
\end{equation}

\subsection{The Activation Residual Hessian}

To solve this objective analytically, we define the residual covariance matrix in the following form:

\begin{equation}
G_x = \frac{1}{N} E_x^\top E_x.
\label{eq:gx_def}
\end{equation}

Using the cyclic property of the trace, we can rewrite the primal objective as
\begin{align}
\lVert E_x (W-L)^\top \rVert_F^2
&= \mathrm{Tr}\!\left((W-L)E_x^\top E_x(W-L)^\top\right) \\
&= N\,\mathrm{Tr}\!\left((W-L)G_x(W-L)^\top\right) \\
&= N\,\lVert (W-L)G_x^{1/2} \rVert_F^2.
\end{align}

Dropping the constant factor $N$, the optimization problem simplifies to a weighted low-rank approximation that solves
\begin{equation}
\min_{\mathrm{rank}(L) \le r} \lVert (W-L)G_x^{1/2} \rVert_F^2.
\label{eq:weighted_objective}
\end{equation}

It is important to explicitly distinguish the residual covariance $G_x$ from the standard activation covariance $H_x = \frac{1}{N} X^\top X$. In standard post-training quantization literature~\citep{frantar2022gptq}, weight importance is typically governed by the activation Hessian $H_x$, which arises naturally as the second-order derivative of the standard signal reconstruction loss. Methods relying on $H_x$ inherently assume that preserving the largest original input features is the optimal strategy. In contrast, ARHQ relies on $G_x$ to directly capture the structure of the quantization noise itself. This ensures that the low-rank split prioritizes fixing the errors introduced by activation quantization, rather than generic signal reconstruction.

Because our primary focus shifts from full signal reconstruction to mitigating the isolated residual propagation error, the optimization landscape is correspondingly governed by $G_x$. We therefore formally term $G_x$ the \textit{activation residual Hessian}. To see this mathematically, consider the residual propagation loss derived from our isolated objective: $\mathcal{L}(W_{\mathrm{res}}) = \frac{1}{N}\lVert E_x W_{\mathrm{res}}^\top \rVert_F^2 = \mathrm{Tr}(W_{\mathrm{res}}G_xW_{\mathrm{res}}^\top)$. Its gradient and Hessian with respect to $W_{\mathrm{res}}$ can be exactly written as:
\begin{equation}
\nabla_{W_{\mathrm{res}}} \mathcal{L} = 2W_{\mathrm{res}}G_x,
\qquad
\nabla^2_{W_{\mathrm{res}}} \mathcal{L} = 2I_{D_{\mathrm{out}}} \otimes G_x.
\end{equation}
Thus, $G_x$ plays the exact same mathematical and functional role in our residual-aware framework as the standard Hessian $H_x$ does in traditional methods like GPTQ~\citep{frantar2022gptq}.

\subsection{Closed-form solution}

\begin{proposition}[ARHQ closed-form decomposition]
Let $W \in \mathbb{R}^{D_{\mathrm{out}} \times D_{\mathrm{in}}}$ and assume the activation residual covariance $G_x \succ 0$. Define the scaled weight matrix $M = WG_x^{1/2}$. If $M_r$ is the optimal rank-$r$ approximation of $M$ under the Frobenius norm, then the exact solution to the weighted objective
\begin{equation}
 \min_{\mathrm{rank}(L)\le r} \lVert (W-L)G_x^{1/2} \rVert_F^2
\end{equation}
is strictly given by
\begin{equation}
    L^\star = M_r G_x^{-1/2}
\end{equation}
\end{proposition}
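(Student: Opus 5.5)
The plan is a change of variables that turns the weighted problem into an unweighted Frobenius low-rank approximation, followed by an appeal to the Eckart--Young--Mirsky theorem. Since $G_x \succ 0$, its symmetric square root $G_x^{1/2}$ exists, is symmetric positive definite, and is invertible with inverse $G_x^{-1/2}$. Consider the linear map $\Phi(L) = L G_x^{1/2}$ on $\mathbb{R}^{D_{\mathrm{out}}\times D_{\mathrm{in}}}$. It is a bijection with inverse $\Phi^{-1}(K) = K G_x^{-1/2}$. Because right multiplication by an invertible matrix preserves rank, $\mathrm{rank}(\Phi(L)) = \mathrm{rank}(L)$. Hence $\Phi$ maps the feasible set $\{L : \mathrm{rank}(L)\le r\}$ bijectively onto $\{K : \mathrm{rank}(K)\le r\}$.

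Next I rewrite the objective in the new variable $K = LG_x^{1/2}$. By linearity,
\begin{equation*}
\lVert (W-L)G_x^{1/2}\rVert_F^2 = \lVert WG_x^{1/2} - LG_x^{1/2}\rVert_F^2 = \lVert M - K\rVert_F^2 .
\end{equation*}
So the weighted problem \eqref{eq:weighted_objective} is equivalent to $\min_{\mathrm{rank}(K)\le r}\lVert M-K\rVert_F^2$. In particular, $L$ is optimal for the original problem if and only if $\Phi(L)$ is optimal for this one, and the two optimal values coincide.

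By Eckart--Young--Mirsky, the truncated SVD $M_r = \sum_{i\le r}\sigma_i u_i v_i^\top$ attains the minimum $\sum_{i>r}\sigma_i^2$. Pulling back through $\Phi^{-1}$ gives the minimizer $L^\star = M_r G_x^{-1/2}$, which has rank at most $r$, and its objective value equals $\lVert M - M_r\rVert_F^2$. Since $\Phi$ is a bijection of the feasible sets, no feasible $L$ can do better.

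The only delicate point is the word ``strictly'' in the statement. The minimizer of the unweighted problem is unique exactly when $\sigma_r > \sigma_{r+1}$ (or when $r \ge \mathrm{rank}(M)$). Without this gap, $M_r$ is only one of several optimal truncations. I would therefore add a remark: under the spectral gap the bijection transfers uniqueness, so $L^\star$ is the unique solution; otherwise $L^\star$ is a solution, and every solution has the form $K G_x^{-1/2}$ for some optimal rank-$r$ truncation $K$ of $M$. I would also record the factorization used later, $B = U_r\Sigma_r$ and $A = G_x^{-1/2}V_r$, so that $L^\star = BA^\top$.
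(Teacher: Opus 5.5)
Your proposal is correct and follows the paper's argument exactly: the change of variables $\widetilde{L} = LG_x^{1/2}$, rank preservation under right multiplication by the invertible $G_x^{1/2}$, Eckart--Young--Mirsky applied to $M$, and mapping back via $G_x^{-1/2}$. Your additional remark is also correct: ``strictly'' (uniqueness) needs $\sigma_r > \sigma_{r+1}$ or $r \ge \mathrm{rank}(M)$, and the paper's proof leaves this condition unstated.
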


\begin{proof}
We introduce the change of variables $\widetilde{L} = L G_x^{1/2}$. Because $G_x \succ 0$, the matrix $G_x^{1/2}$ is invertible, which guarantees that right-multiplication preserves the matrix rank, i.e., $\mathrm{rank}(\widetilde{L}) = \mathrm{rank}(L)$. The weighted objective can thus be rewritten as a standard unweighted approximation problem:
\begin{equation}
    \min_{\mathrm{rank}(\widetilde{L})\le r} \lVert M - \widetilde{L} \rVert_F^2
\end{equation}

By the Eckart-Young-Mirsky theorem, the optimal solution for $\widetilde{L}$ is the truncated SVD of $M$, denoted as $M_r$. Mapping this solution back to the original parameter space yields $L^\star = M_r G_x^{-1/2}$, concluding the proof.
\end{proof}

\subsection{Factorization for Deployment}
While $L^\star$ represents the mathematical optimum, deploying it directly as a dense matrix offers no computational savings. We must factorize it into two smaller matrices to construct the low-rank side branch $L = BA^\top$.

Let the truncated SVD of the scaled weight matrix be explicitly written as $M_r = U_r \Sigma_r V_r^\top$. Substituting this into our optimal solution gives
\begin{equation}
    L^\star = (U_r \Sigma_r V_r^\top) G_x^{-1/2}
\end{equation}
To match the parameter shape of a dual-layer linear bottleneck, we group the terms to define the down-projection matrix $A \in \mathbb{R}^{D_{\mathrm{in}} \times r}$ and the up-projection matrix $B \in \mathbb{R}^{D_{\mathrm{out}} \times r}$ as follows
\begin{equation}
    B = U_r \Sigma_r, \qquad A = G_x^{-1/2} V_r
\end{equation}
This yields the final operational form $L^\star = BA^\top$. During inference, the input $X$ is sequentially multiplied by $A$ and $B^\top$, perfectly mirroring the standard LoRA architecture but achieved entirely post-training without backpropagation.

\subsection{Numerical regularization}

In practical deployments, $G_x$ is frequently singular or heavily ill-conditioned. This typically occurs when the number of calibration samples is smaller than the input dimension $D_{\mathrm{in}}$, when quantization residuals are near zero for specific channels, or when prior smoothing operations artificially lower the effective rank.To handle this instability, we compute the eigendecomposition of the residual covariance, $G_x = U\,\mathrm{diag}(\lambda)\,U^\top$, and apply a floor $\varepsilon$ to the eigenvalues to create a regularized metric as 

\begin{equation}
G_{x,\varepsilon} = U\,\mathrm{diag}(\max(\lambda_i,\varepsilon))\,U^\top.
\end{equation}

The safely deployed objective therefore becomes
\begin{equation}
\min_{\mathrm{rank}(L)\le r} \lVert (W-L)G_{x,\varepsilon}^{1/2} \rVert_F^2.
\end{equation}
This regularization stabilizes the inversion $G_{x,\varepsilon}^{-1/2}$ and prevents the unconstrained growth of $L$ along directions with near-zero residual energy, ensuring the problem remains identifiable and numerically robust.

\begin{algorithm}[H]
\caption{Activation Residual Hessian Quantization (ARHQ) with Smoothing}
\label{alg:arhq_smoothing}
\begin{algorithmic}[1]
\Require Original weight matrix $W \in \mathbb{R}^{D_{out} \times D_{in}}$, Calibration activations $X_{calib} \in \mathbb{R}^{N_{calib} \times D_{in}}$, Positive diagonal smoothing matrix $S = \text{diag}(s)$, Activation quantizer $Q_x(\cdot)$, Target rank $r$, Regularization floor $\epsilon$
\Ensure Main branch residual weight $W_{res,s}$, Down-projection matrix $A_s$, Up-projection matrix $B$

\Statex \textbf{\textit{Stage 1: Smoothing Preprocessing}}
\State $X_s \gets X_{calib}S^{-1}$ \Comment{Smooth calibration activations}
\State $W_s \gets WS$ \Comment{Smooth weight matrix}

\Statex \textbf{\textit{Stage 2: Residual Covariance Computation}}
\State $X_q \gets Q_x(X_s)$ \Comment{Fake-quantize in the smoothed space}
\State $E_s \gets X_q - X_s$ \Comment{Compute smoothed activation residuals}
\State $G_s \gets \frac{1}{N_{calib}} E_s^\top E_s$ \Comment{Construct residual covariance (Hessian)}

\Statex \textbf{\textit{Stage 3: Numerical Regularization}}
\State Compute eigendecomposition $G_s = U \text{diag}(\lambda) U^\top$
\State $G_{s,\epsilon} \gets U \text{diag}(\max(\lambda_i, \epsilon)) U^\top$ \Comment{Apply eigenvalue floor}
\State Construct $G_{s,\epsilon}^{1/2}$ and $G_{s,\epsilon}^{-1/2}$

\Statex \textbf{\textit{Stage 4: Scaled Truncated SVD}}
\State $M_s \gets W_s G_{s,\epsilon}^{1/2}$ \Comment{Scale weights by residual metric}
\State $U_r, \Sigma_r, V_r \gets \text{TruncatedSVD}(M_s, r)$ \Comment{Extract top-$r$ components}

\Statex \textbf{\textit{Stage 5: Factorization for Deployment}}
\State $B \gets U_r \Sigma_r$ \Comment{Construct up-projection factor}
\State $A_s \gets G_{s,\epsilon}^{-1/2} V_r$ \Comment{Construct down-projection factor}
\State $L_s \gets B A_s^\top$ \Comment{Form high-precision low-rank branch}
\State $W_{res,s} \gets W_s - L_s$ \Comment{Compute residual weight for main branch}

\State \Return $W_{res,s}, A_s, B$
\end{algorithmic}
\end{algorithm}

\section{Practical Pipeline}
ARHQ reduces the propagation of activation residuals, but it does not directly shrink the residual magnitude. Therefore, it is highly complementary to activation smoothing. Smoothing reduces or redistributes the activation error, while ARHQ subsequently adapts the low-rank split to the residual structure that remains after the smoothing operation. We integrate these two mechanisms into a unified practical pipeline.  

Let $S = \text{diag}(s)$ be a positive diagonal smoothing matrix. We first apply an equivalent transform to the calibration activations and the weights
\begin{equation}
    X_s = X S^{-1}, \qquad W_s = W S.
\end{equation}

This transformation preserves the full-precision output, given that 
\begin{equation}
    X_s W_s^\top = X S^{-1} (W S)^\top = X W^\top.
\end{equation}
Operating entirely within this smoothed space, the unified ARHQ decomposition is computed offline using calibration data, as detailed in Algorithm 1.  

At runtime, inference seamlessly integrates the smoothed parameters. Assuming the pre-quantized, smoothed activation $X S^{-1}$ is available at the linear layer boundary before online activation quantization, the forward pass is structurally approximated by
\begin{equation}
    \hat{Y} = Q_x(X S^{-1}) Q_w(W_{res,s})^\top + (X S^{-1} A_s) B^\top
\end{equation}

In this configuration, the main branch computes the quantized matrix multiplication 
\begin{equation}
    Q_x(X S^{-1}) Q_w(W_{res,s})^\top, 
\end{equation}
while the side branch computes the higher-precision projection $(X S^{-1} A_s) B^\top$. For a single layer, this dual-branch architecture introduces an additional arithmetic cost of $\mathcal{O}(N r (D_{in} + D_{out}))$. The additional parameter count is $r(D_{in} + D_{out})$. For square layers where $D_{in} = D_{out} = D$, this parameter overhead ratio simplifies to $2r/D$.  

\section{Experimental Evidence}

\subsection{Setup summary}

We summarize the current experimental scope used to validate the method:
\begin{itemize}
    \item \textbf{Model.} Qwen3-4B-Thinking-2507.
    \item \textbf{Modules.} Attention projections across 36 layers.
    \item \textbf{Rank.} $r=128$.
    \item \textbf{Quantization focus.} Aggressive low-bit activation-weight quantization, including NVFP4-style settings in the reported SNR study.
    \item \textbf{Baselines.} A standard SVD low-rank split with and without smoothing.
\end{itemize}

Several implementation details remain to be fully documented in a final reproducibility package, including exact block sizes, scale computation rules, accumulation dtypes, and whether low-rank factors are stored in bf16 or fp16. We state these omissions explicitly because they matter for a complete arXiv release.

\subsection{SNR definition}

For a floating-point reference output $Y$ and a quantized approximation $\hat{Y}$, we use
\begin{equation}
\mathrm{SNR}(Y,\hat{Y}) = 10\log_{10}\frac{\lVert Y \rVert_F^2}{\lVert Y-\hat{Y} \rVert_F^2}.
\end{equation}
Higher values are better.

\subsection{Layer-wise attention projection SNR}

Table~\ref{tab:snr} reports the current SNR measurements on Qwen3-4B-Thinking-2507 attention projections. ARHQ raw corresponds to residual-aware splitting without smoothing; ARHQ smoothing adds smoothing before decomposition.

\begin{table}[t]
\centering
\caption{Layer-wise SNR (dB) on 36 attention layers, rank $r=128$.}
\label{tab:snr}
\begin{tabular}{lcccc}
\toprule
Scope & ARHQ Raw & ARHQ Smooth & SVD Raw & SVD Smooth \\
\midrule
q/k/v/o average & 24.4229 & 24.7269 & 22.6314 & 24.2742 \\
q\_proj & 28.3389 & 28.6415 & 25.7253 & 28.1420 \\
k\_proj & 28.3682 & 28.6888 & 25.8594 & 27.9529 \\
v\_proj & 21.4883 & 21.7858 & 19.6953 & 21.4014 \\
o\_proj & 19.4964 & 19.7917 & 19.2456 & 19.6006 \\
\bottomrule
\end{tabular}
\end{table}

ARHQ outperforms the unweighted SVD split without smoothing by a clear margin. On the q/k/v/o average, ARHQ raw reaches $24.4229$ dB versus $22.6314$ dB for SVD raw, a gain of about $1.79$ dB. With smoothing, ARHQ remains best at $24.7269$ dB, also improving over SVD smoothing at $24.2742$ dB. This pattern is consistent with the claim that residual-aware weighting chooses more useful low-rank directions than weight-only reconstruction.

\subsection{Improvement over quantized baseline}

Table~\ref{tab:snr_gain} reports the improvement over the underlying quantized baseline.

\begin{table}[t]
\centering
\caption{Improvement over baseline SNR (dB).}
\label{tab:snr_gain}
\begin{tabular}{lcccc}
\toprule
Scope & ARHQ Raw & ARHQ Smooth & SVD Raw & SVD Smooth \\
\midrule
q/k/v/o average & +4.0828 & +3.7906 & +2.5627 & +3.3378 \\
q\_proj & +5.1668 & +4.7555 & +3.2661 & +4.2560 \\
k\_proj & +6.5745 & +6.1158 & +4.2142 & +5.3799 \\
v\_proj & +2.9964 & +2.6233 & +1.5308 & +2.2389 \\
o\_proj & +1.5936 & +1.6675 & +1.2397 & +1.4764 \\
\bottomrule
\end{tabular}
\end{table}

These numbers suggest two practical conclusions. First, residual-aware weighting matters even before smoothing. Second, smoothing and ARHQ appear complementary rather than redundant.

\subsection{Small-scale ZebraLogic evaluation}

We also ran a preliminary generation-based evaluation on a 140-problem ZebraLogic subset using Qwen3-4B-Thinking-2507 with seed $0$, temperature $0.6$, top-$p=0.95$, and top-$k=20$.

\begin{table}[t]
\centering
\caption{Preliminary ZebraLogic results on 140 puzzles.}
\label{tab:zebra}
\begin{tabular}{lccc}
\toprule
Method & Puzzle Acc & Cell Acc & Cell Recall \\
\midrule
bf16 & 130/140 = 92.86\% & 2671/2732 = 97.77\% & 2671/2837 = 94.15\% \\
nvfp4 & 105/140 = 75.00\% & 2279/2625 = 86.82\% & 2279/2783 = 81.89\% \\
awq4bit & 125/140 = 89.29\% & 2589/2771 = 93.43\% & 2589/2873 = 90.11\% \\
svdquant\_r128 & 127/140 = 90.71\% & 2643/2766 = 95.55\% & 2643/2873 = 91.99\% \\
arhq\_r128\_all & 130/140 = 92.86\% & 2660/2779 = 95.72\% & 2660/2873 = 92.59\% \\
\bottomrule
\end{tabular}
\end{table}

On this small subset, ARHQ achieves the highest observed accuracy among quantized variants and matches the bf16 run on puzzle accuracy. However, the sample size is too small for strong conclusions. For $130/140$, the binomial standard error is about $2.18\%$, so a 95\% interval is roughly $\pm 4.3\%$. The difference between $127/140$ and $130/140$ is therefore within plausible statistical noise. We view Table~\ref{tab:zebra} as encouraging evidence rather than definitive proof.

\section{Discussion}

\subsection{Why residual Hessian instead of activation Hessian?}

Using $H_x = X^\top X/N$ would prioritize directions with large activation energy, but low-bit quantization error is not generally isotropic and need not align with those directions. ARHQ instead uses $G_x = E_x^\top E_x/N$, which is directly measured from the deployed quantizer and calibration data. This makes the method sensitive to block-wise quantization, non-uniform floating-point codebooks, scale selection rules, and any layer-specific distortion pattern.

\subsection{Quantizer adaptivity}

ARHQ is adaptive not only to the quantizer family but to the triple of \emph{quantizer, calibration distribution, and layer input distribution}. More precisely,
\begin{equation}
G_x = \mathbb{E}_{x\sim\mathcal{D}_{\mathrm{calib}}}\big[(Q_x(x)-x)(Q_x(x)-x)^\top\big].
\end{equation}
This is a strength when calibration matches deployment, but also a limitation: a mismatched calibration set can misestimate the true residual structure and reduce the benefit of the split.

\subsection{Interaction with weight quantization}

The exact ARHQ objective solves the activation-residual subproblem, not the full joint objective in Equation~\eqref{eq:full_error}. After the split, $W_{\mathrm{res}}$ is still quantized, which introduces additional terms involving $E_w$. In some cases, the inverse square-root mapping can enlarge the dynamic range of the low-rank factors or the residual weight, potentially making $W_{\mathrm{res}}$ harder to quantize. We partially address this with eigenvalue clamping and consider quantization-aware post-split refinement an important direction for future work.

\section{Limitations and Future Work}

The current formulation is deliberately narrow so that it admits a closed-form solution. Several important directions remain open.

\textbf{Weight-quantization-aware ARHQ.} A more complete objective would combine activation residual propagation with residual-weight quantization loss, for example
\begin{equation}
\min_L \lVert E_x(W-L)^\top \rVert_F^2 + \lambda\,\lVert X((W-L)-Q_w(W-L))^\top \rVert_F^2,
\end{equation}
but this no longer has a simple closed form because the quantizer depends on $L$.

\textbf{Joint smoothing and splitting.} In this report, smoothing is a preprocessing step. Jointly optimizing the smoothing transform and the ARHQ split may yield better trade-offs.

\textbf{Rank allocation.} All current results use a fixed rank $r=128$. A natural extension is to allocate a global rank budget according to the singular value gains of $WG_x^{1/2}$ at each layer.

\textbf{Lower-cost variants.} Full covariance ARHQ requires eigendecomposition of a $D_{\mathrm{in}}\times D_{\mathrm{in}}$ matrix. Diagonal or block-diagonal approximations may offer a better quality-cost trade-off.

\textbf{Stronger evaluation.} A complete arXiv version should add rank ablations, comparisons against $H_x$-weighted and identity-weighted objectives, calibration-size studies, layer-wise spectra, perplexity, broader reasoning benchmarks, and end-to-end latency and memory measurements.



\bibliographystyle{plainnat}
\bibliography{tech_report}

\end{document}